\newtheorem{theorem}{Theorem}[section]
\newtheorem{proposition}[theorem]{Proposition}
\newtheorem{exam}[theorem]{Example}
\newcommand{\bbox}{\vrule height7pt width4pt depth1pt}
\newenvironment{example}{\begin{exam} \rm }{\hfill $\bbox$ \end{exam}}
\newtheorem{rema}[theorem]{Remark}
\newtheorem{obs}[theorem]{Observation}
\newtheorem{defin}[theorem]{Definition}
\newenvironment{definition}{\begin{defin} \rm }{\hfill $\bbox$
\end{defin}}
\newenvironment{proof}{\noindent {\bf Proof.}}{\hfill $\bbox$\\}
\newcommand{\Oo}{\Omega}
\newcommand{\oo}{\omega}
\newcommand{\ffi}{\varphi}
\newcommand{\vDasho}{\vDash^{\ou}}
\newcommand{\ou}{\mathit{out}}
\newcommand{\vDashi}{\vDash^{\inner}}
\newcommand{\inner}{\mathit{in}}
\newcommand{\vDashos}{\vDash^{\ous}}
\newcommand{\ous}{\mathit{out,ai}}
\newcommand{\vDashis}{\vDash^{\inners}}
\newcommand{\inners}{\mathit{in,ai}}
\newcommand{\M}{\mathcal{M}}
\newcommand{\Mis}{\mathcal{M}^{\mathit{ai}}}
\newcommand{\N}{\mathcal{N}}
\newcommand{\Mcpa}{\mathcal{M}^{\mathit{cpa}}}
\newcommand{\Miscpa}{\mathcal{M}^{\mathit{cpa,ai}}}
\newcommand{\F}{\mathcal{F}}
\newcommand{\B}{{\mathcal B}}
\newcommand{\pr}{{\mathit{pr}}}
\newcommand{\commentout}[1]{}
\renewcommand{\iff}{\Leftrightarrow}
\newcommand{\true}{\mathit{true}}
\newcommand{\EB}{\mathit{EB}}
\newcommand{\CB}{\mathit{CB}}
\title{A logic for reasoning about ambiguity}
\author{{Joseph Y. Halpern}\\
Computer Science Dept.\\
Cornell University\\
Ithaca, NY\\
E-mail: halpern@cs.cornell.edu\\
\and
Willemien Kets\\
Kellogg School of Management\\
Northwestern University\\
Evanston, IL \\
E-mail: w-kets@kellogg.northwestern.edu
}
\begin{document}
\maketitle

\begin{abstract}
Standard models of multi-agent modal logic do not capture the fact that
information is often \emph{ambiguous}, and may be interpreted in different
ways by different agents. We propose a framework that can model this,
and consider different semantics that capture different assumptions
about the agents' beliefs regarding whether or not there is ambiguity.
We examine the expressive power of logics of ambiguity compared to
logics that cannot model ambiguity, with respect to the different
semantics that we propose.
\end{abstract}

\section{Introduction} \label{sec:intro}

In the study of multi-agent modal logics, it is implicitly
assumed that all agents interpret all
formulas the same way.  While they may have different beliefs regarding
whether a formula $\ffi$ is true, they agree on what $\ffi$ means. Formally, this is captured by the fact that the truth of $\ffi$ does not depend on the agent.

Of course, in the real world, there is \emph{ambiguity}; different
agents may interpret the same utterance in different ways.
For example, consider a public announcement $p$. Each player $i$ may
interpret $p$ as corresponding to some event $E_i$, where $E_i$ may be
different from $E_j$ if $i \ne j$.  This seems natural: even if
people have a common background, they may still disagree on how to
interpret certain phenomena or new
information. Someone may interpret a smile as just a sign of
friendliness; someone else may interpret it as a ``false'' smile,
concealing contempt; yet another person may interpret it as a sign of
sexual interest.

To model this formally, we can use a straightforward approach already
used in \cite{Hal43,GroveH2}: formulas are interpreted
relative to a player.  But once we allow such ambiguity, further
subtleties arise.  Returning to the announcement $p$, not only can it be
interpreted differently by different players, it may not even occur
to the players that
others may interpret the announcement in a different way.
Thus, for example, $i$ may believe that $E_i$ is
common knowledge.
The assumption that each player believes that her interpretation is how
everyone interprets the announcement is but one assumption we can make
about ambiguity.
It is also possible that player $i$ may be aware that
there is more than one interpretation of $p$, but believes that player
$j$ is aware of only one interpretation. For example, think of a
politician making an ambiguous statement which he realizes that
different constituencies will interpret differently, but will not
realize that there are other possible interpretations.
In this paper, we investigate a number of different semantics of
ambiguity that correspond to some standard assumptions that people make
with regard to ambiguous statements, and investigate their
relationship.

Our interest in
ambiguity was originally motivated by a seminal result in game theory: Aumann's
\citeyear{Au} theorem showing that players cannot ``agree to
disagree.''  More precisely, this theorem says that agents with a common
prior on a state space cannot have common knowledge that they have
different posteriors.
This result has been viewed as paradoxical in the economics literature.
Trade in a stock market seems to require common knowledge of
disagreement (about the value of the stock being traded), yet we clearly
observe a great deal of trading.
%
One well known explanation for the disagreement is that we do not in
fact have common priors: agents start out with different beliefs.
In a companion paper \cite{HK13}, we provide a different explanation, in
terms of ambiguity.
 It is easy to show that we can agree to disagree when
there is ambiguity, even if there is a common prior.


\commentout{
We then show that these two explanations of the possibility of agreeing
to disagree are closely related, but not identical. We can convert an
explanation in
terms of ambiguity to an explanation in terms of lack of common
priors.\footnote{More precisely, we can convert a structure with ambiguity
and a common prior to an equivalent model---equivalent in the sense that
the same formulas are true---where there is no ambiguity but no common
prior.}
Importantly, however, the converse does not hold; there are models in
which players have a common interpretation that cannot in general be
converted into an equivalent model with ambiguity and a common prior. In
other words, using heterogeneous priors may be too permissive if we are
interested in modeling a situation where differences in beliefs are due
to differences in interpretation.
}

Although our work is motivated by applications in economics,
ambiguity has long been a concern in philosophy, linguistics, and
natural language processing.  For example, there has been
a great deal of work on word-sense
disambiguation (i.e., trying to decide from context which of the
multiple meanings of a word are intended);
see Hirst \citeyear{Hirst_1988} for a seminal contribution, and Navigli
\citeyear{Navigli09} for a recent survey.
However, there does not seem to be much work on incorporating ambiguity
into a logic.
Apart from the literature on the logic of context and on
underspecification (see Van Deemter and Peters
\citeyear{VanDeemterPeters_1996}),
the only papers that we are aware of that does this are
ones by Monz \citeyear{Monz99} and Kuijer \citeyear{Kui13}.  Monz allows
for statements that have
multiple interpretations, just as we do.
But rather than
incorporating the ambiguity directly into the logic, he considers updates
by ambiguous statements.  

Kuijer models the fact that ambiguous statements can have multiple
meanings by using a nondeterministic propositional logic, which, roughly
speaking allows him to consider all the meanings simultaneously.  He
then defines a notion of implication such that an ambiguous statement
$A$ entails another ambiguous statement $B$ if and only if every possible
interpretations of $A$ entails every possible interpretation of $B$.
This idea of considering all possible interpretations of an ambiguous
statement actually has a long tradition in the philosophy literature.
For example, Lewis \citeyear{Lewis82} considers
assigning truth values to an ambiguous formula $\phi$ by considering all
possible disambiguations of $\phi$.  This leads to a semantics where a
formula can, for example, have the truth value $\{\mathbf{true},
\mathbf{false}\}$.  Lewis views this as a potential justification for
\emph{relevance logic} (a logic where a formula can be true, false,
both, or neither; cf.~\cite{ReBr}).  Our approach is somewhat different.  We
assume that each agent uses only one interpretation of a given ambiguous
formula $\phi$, but an agent may consider it possible that another agent
interprets $\phi$ differently.  In our applications, this seems to be
the most appropriate way to dealing with ambiguity (especially when it
comes to considering the strategic implications of ambiguity).

There are also connections between ambiguity
and vagueness.  Although the two notions are different---a term is
\emph{vague} if it is not clear what its meaning is, and
is \emph{ambiguous} if it can have multiple meanings, Halpern
\citeyear{Hal43} also used agent-dependent interpretations in his
model of vagueness, although the issues that arose were quite different
from those that concern us here.

Given the widespread interest in ambiguity, in this paper we focus on
the logic of ambiguity.  We introduce the logic in
Section~\ref{sec:model}.  The rest of the paper is devoted to arguing
that, in some sense, ambiguity is not necessary.  In
Section~\ref{sec:equivalence}, we show that a formula is satisfiable
in a structure with ambiguity (i.e., one where different agents
interpret formulas differently) if and only if it is satisfiable in a structure
without ambiguity.
Then in Section~\ref{sec:genlanguage}, we show that, by extending the
language so that we can talk explicitly about how agents interpret
formulas, we do not need structures with ambiguity.
Despite that, we argue in Section~\ref{sec:concl} that we it is useful to
be able to model ambiguity directly, rather than indirectly.

\commentout{
The rest of this paper is organized as follows.
 Section \ref{sec:model} introduces the logic that we consider. Section
\ref{sec:examples} investigates the implications of the common-prior
assumption when there is ambiguity. Section \ref{sec:equivalence}
studies the tradeoff between heterogeneous priors and ambiguity, and
Section \ref{sec:concl} concludes.
}

\section{Syntax and Semantics} \label{sec:model}

\subsection{Syntax}\label{sec:syntax}

We want a logic where players use a fixed common language,
but each player may interpret formulas in the language differently.
Although we do not need probability for the points we want to make in
this paper, for the applications that we have in mind it is
also important for the agents to be able to reason about their
probalistic beliefs.  Thus, we take as our base logic a propositional logic
for reasoning about probability.

The syntax of the logic is straightforward
(and is, indeed, essentially the syntax already used in papers going
back to Fagin and Halpern \citeyear{FH3}).
There is a finite, nonempty set $N = \{1, \ldots, n\}$ of players, and
a countable, nonempty set $\Phi$ of primitive propositions. Let
$\mathcal{L}_n^C(\Phi)$ be the set of formulas
that can be constructed starting from $\Phi$, and closing off under
conjunction, negation, the modal operators
$\{\CB_G\}_{G \subseteq N, G \neq \emptyset}$, and the formation
of probability formulas.
(We omit the $\Phi$ if it is irrelevant or clear from context.)
  Probability formulas are
constructed as follows. If
$\ffi_1, \ldots, \ffi_k$ are formulas, and $a_1, \ldots, a_k, b
\in \mathbb{Q}$, then for $i \in N$,
\[
a_1 \pr_i(\ffi_1) + \ldots + a_k \pr_i(\ffi_k) \geq b
\]
is a probability formula,
where $\pr_i(\ffi)$ denotes the probability that player $i$
assigns to a formula $\ffi$.
Note that this syntax
allows for nested probability formulas. We use the abbreviation
$B_i \ffi$ for $\pr_i (\ffi) = 1$, $\EB_G^1 \ffi$ for $\wedge_{i \in G} B_i
\ffi$, and
$\EB^{m+1}_G\ffi$ for $\EB^m_G \EB^1_G\ffi$ for $m = 1,2 \ldots$.
Finally, we take $\true$ to be the abbreviation for a fixed tautology
such as $p \vee \neg p$.

\subsection{Epistemic probability structures}

There are standard approaches for interpreting this language \cite{FH3},
but they all assume that there is no ambiguity, that
is, that all players interpret the primitive propositions the same way.
To allow for different interpretations, we use an approach used earlier
\cite{Hal43,GroveH2}: formulas are interpreted
relative to a player.

An \emph{(epistemic probability) structure} (\emph{over $\Phi$})
has the form
\[
M = (\Oo, (\Pi_j)_{j \in N}, ({\cal P}_j)_{j \in N}, (\pi_j)_{j \in N}),
\]
where $\Oo$ is the state space, and for each $i \in N$, $\Pi_i$
is a partition of $\Oo$, ${\cal P}_i$ is a function that
assigns to each $\oo \in \Oo$ a probability space ${\cal
P}_i(\oo) = (\Oo_{i,\oo}, \F_{i,\oo}, \mu_{i,\oo})$,
and $\pi_i$ is an interpretation that associates with each
state a truth assignment to the primitive propositions in $\Phi$. That
is, $\pi_i(\oo)(p) \in \{\mathbf{true}, \mathbf{false}\}$ for
all $\oo$ and each primitive proposition $p$.
Intuitively, $\pi_i$ describes player $i$'s interpretation of the
primitive propositions. Standard models use only a single
interpretation $\pi$; this is equivalent in our framework to assuming
that $\pi_1 = \cdots = \pi_n$. We call a structure where $\pi_1 = \cdots = \pi_n$ a
\emph{common-interpretation structure};
we call a structure where $\pi_i \ne \pi_j$ for some agents $i$ and $j$
a structure \emph{with ambiguity}.
Denote by $[[p]]_i$ the set of states where $i$ assigns the value
$\mathbf{true}$ to $p$.
The partitions $\Pi_i$ are called \emph{information partitions}.
While it is more standard in the philosophy and computer science
literature to use models where there is a binary relation ${\cal K}_i$
on $\Oo$ for each agent $i$ that describes $i$'s accessibility relation
on states,
we follow the common approach in economics
of working with information partitions here, as that makes it
particularly easy to define a player's probabilistic beliefs. Assuming
information partitions corresponds to the case that ${\cal K}_i$ is an
equivalence relation (and thus defines a partition).  The intuition is
that a cell in the partition $\Pi_i$ is defined by some information that
$i$ received, such as signals or observations of the world.
Intuitively, agent $i$
receives the same information at each state in a cell of $\Pi_i$.
Let $\Pi_i(\oo)$ denote the cell of the partition $\Pi_i$ containing $\oo$.
Finally, the probability space ${\cal
P}_i(\oo) = (\Oo_{i,\oo}, \F_{i,\oo}, \mu_{i,\oo})$ describes the
beliefs of player $i$ at state $\oo$, with $\mu_{i,\oo}$ a probability
measure defined on the subspace $\Oo_{i,\oo}$ of the state space $\Oo$.
The $\sigma$-algebra $\F_{i, \oo}$ consists of the subsets of
$\Oo_{i,\oo}$ to which $\mu_{i,\oo}$ can assign a probability.  (If
$\Oo_{i,\oo}$ is finite, we typically take $\F_{i,\oo} =
2^{\Oo_{i,\oo}}$, the set
of all subsets of $\Oo_{i,\oo}$.)
The interpretation is that $\mu_{i,\oo}(E)$ is the probability that
$i$ assigns to event $E \in \F_{i,\oo}$ in state $\oo$.

Throughout this paper, we make the following assumptions regarding the
probability assignments ${\cal P}_i$, $i \in N$:
\begin{description}
  \item[\textbf{A1}.] For all $\oo \in \Oo$, $\Oo_{i,\oo} = \Pi_i(\oo)$.
  \item[\textbf{A2}.] For all $\oo \in \Oo$, if $\oo' \in \Pi_i(\oo)$,
 then ${\cal P}_i(\oo') = {\cal P}_i(\oo)$.
  \item[\textbf{A3}.] For all $j \in N, \oo, \oo' \in \Oo$,
      $\Pi_i(\oo) \cap \Pi_j(\oo') \in \F_{i,\oo}$.
\end{description}
Furthermore, we make the following joint assumption on players'
interpretations and information partitions:
\begin{description}
  \item[\textbf{A4}.] For all $\oo \in \Oo$, $i \in N$, and
            primitive proposition $p \in \Phi$, $\Pi_i(\oo)
      \cap [[p]]_i \in \F_{i,\oo}$.
\end{description}
These are all standard assumptions.  A1 says that the set of states to
which player $i$ assigns probability at state $\oo$ is just the set
$\Pi_i(\oo)$ of worlds that $i$ considers possible at state $\oo$.
A2 says that the probability space used is the same at all the worlds in
a cell of player $i$'s partition.  Intuitively, this says that
player $i$ knows his probability space. Informally, A3 says that player
$i$ can assign a probability to each of $j$'s cells, given his information.
A4 says that primitive propositions (as interpreted by player $i$) are
measurable
according to player $i$.

\subsection{Prior-generated beliefs}

One assumption that we do not necessarily make, but want to examine in
this framework, is the common-prior assumption.
The common-prior assumption is an instance of a more general assumption,
that beliefs are generated from a prior, which we now define.  The
intuition is that players start with a prior probability; they then
update the prior in light of their information.  Player $i$'s information
is captured by her partition  $\Pi_i$. Thus, if $i$'s prior is $\nu_i$,
then we would expect $\mu_{i,\oo}$ to be $\nu_i(\cdot \mid \Pi_i(\oo))$.

\begin{definition}
An epistemic probability structure $M = (\Oo, (\Pi_j)_{j \in
N}, ({\cal P}_j)_{j \in N}, (\pi_j)_{j \in N})$ has
\emph{prior-generated beliefs}
(\emph{generated by $(\F_1,\nu_1),\ldots, (\F_n,\nu_n)$})
if, for each player $i$, there exist
probability spaces $(\Oo,\F_i,\nu_i)$ such that
\begin{itemize}
\item for all $i,j \in N$ and $\oo \in \Oo$, $\Pi_j(\oo) \in \F_i$;
\item for all $i \in N$ and $\oo \in \Oo$, ${\cal P}_i(\oo) =
    (\Pi_i(\oo), \F_i \mid \Pi_i(\oo), \mu_{i,\oo})$,
    where $\F_i\mid\Pi_i(\oo)$ is the restriction of $\F_i$ to $\Pi_i(\oo)$,%
\footnote{Recall that the restriction of ${\cal F}_i$ to
$\Pi_i(\oo)$ is the $\sigma$-algebra $\{B \cap \Pi_i(\oo): B \in {\cal
F}_i\}$.} %
and $\mu_{i,\oo}(E) =
    \nu_i(E \mid \Pi_i(\oo))$ for all $E \in
    \F_i \mid \Pi_i(\oo)$ if $\nu_i(\Pi_i(\oo)) > 0$.
(There are no constraints on $\nu_{i,\oo}$ if $\nu_i(\Pi_i(\oo)) = 0$.)
\end{itemize}
\end{definition}
It is easy to check that if $M$ has prior-generated beliefs, then
$M$ satisfies A1, A2, and A3.
More interestingly for our purposes, the converse also holds for a large
class of structures.
Say that a structure is
\emph{countably partitioned} if for each player
$i$, the information partition $\Pi_i$ has countably many elements,
i.e., $\Pi_i$ is a finite or countably infinite collection of subsets of
$\Omega$.

\begin{proposition}\label{prop:priorgenerated}
If a structure $M$ has
prior-generated beliefs, then
$M$ satisfies A1, A2, and A3. Moreover, every countably partitioned structure that
satisfies A1, A2, and A3 is one with prior-generated beliefs,
with the priors $\nu_i$ satisfying $\nu_i(\Pi_i(\oo))>0$ for each player
$i \in N$ and state $\oo \in \Oo$.%
\end{proposition}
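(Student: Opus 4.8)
The first assertion is a matter of unwinding the definition. Suppose $M$ has prior-generated beliefs generated by $(\mathcal{F}_1,\nu_1),\ldots,(\mathcal{F}_n,\nu_n)$. Then $\mathcal{P}_i(\oo)$ has sample space $\Pi_i(\oo)$, which is A1; the probability space $\mathcal{P}_i(\oo)$ depends on $\oo$ only through the cell $\Pi_i(\oo)$, so two states in the same cell get the same probability space, which is A2; and since $\Pi_j(\oo') \in \mathcal{F}_i$ by the first bullet of the definition, its restriction $\Pi_i(\oo) \cap \Pi_j(\oo')$ lies in $\mathcal{F}_i \mid \Pi_i(\oo) = \mathcal{F}_{i,\oo}$, which is A3. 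I would dispatch this in a couple of sentences.

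For the converse, fix a countably partitioned structure $M$ satisfying A1--A3 and fix a player $i$. For each cell $C \in \Pi_i$ choose a representative state $\oo_C \in C$; by A2 (and A1) the probability space $\mathcal{P}_i(\oo_C) = (C, \mathcal{F}_{i,\oo_C}, \mu_{i,\oo_C})$ is independent of the choice of $\oo_C$. Define
\[
\mathcal{F}_i := \{\, E \subseteq \Oo : E \cap C \in \mathcal{F}_{i,\oo_C} \text{ for every } C \in \Pi_i \,\}.
\]
Because the cells partition $\Oo$ and each $\mathcal{F}_{i,\oo_C}$ is a $\sigma$-algebra on $C$, one checks directly that $\mathcal{F}_i$ is a $\sigma$-algebra on $\Oo$. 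Assumption A3 says precisely that $\Pi_j(\oo') \cap C \in \mathcal{F}_{i,\oo_C}$ for all $j$, $\oo'$ and all $C$, so $\Pi_j(\oo') \in \mathcal{F}_i$. And $\mathcal{F}_i \mid \Pi_i(\oo) = \mathcal{F}_{i,\oo}$ for every $\oo$: the inclusion $\subseteq$ is immediate from the definition of $\mathcal{F}_i$, and for $\supseteq$ any $A \in \mathcal{F}_{i,\oo}$ extends to a member of $\mathcal{F}_i$ by taking $A$ on the cell $\Pi_i(\oo)$ and $\emptyset$ on every other cell.

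Now I use countability of $\Pi_i$: choose weights $\alpha_C > 0$, $C \in \Pi_i$, with $\sum_{C \in \Pi_i}\alpha_C = 1$, and set $\nu_i(E) := \sum_{C \in \Pi_i}\alpha_C\,\mu_{i,\oo_C}(E \cap C)$ for $E \in \mathcal{F}_i$. The series converges (its terms lie in $[0,\alpha_C]$); countable additivity of $\nu_i$ follows from that of each $\mu_{i,\oo_C}$ after rearranging a nonnegative double series; and $\nu_i(\Oo) = \sum_C \alpha_C\,\mu_{i,\oo_C}(C) = \sum_C \alpha_C = 1$ by A1, so $\nu_i$ is a probability measure on $(\Oo,\mathcal{F}_i)$. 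Writing $C = \Pi_i(\oo)$, only the $C$-term survives in $\nu_i(C)$, so $\nu_i(\Pi_i(\oo)) = \alpha_C > 0$; and for $E \in \mathcal{F}_{i,\oo} = \mathcal{F}_i \mid \Pi_i(\oo)$ we have $E \subseteq C$, hence $\nu_i(E) = \alpha_C\,\mu_{i,\oo_C}(E) = \alpha_C\,\mu_{i,\oo}(E)$ by A2, so $\nu_i(E \mid \Pi_i(\oo)) = \mu_{i,\oo}(E)$. Combining this with $\Oo_{i,\oo} = \Pi_i(\oo)$ (A1) and $\mathcal{F}_i \mid \Pi_i(\oo) = \mathcal{F}_{i,\oo}$ shows that $M$ has prior-generated beliefs generated by $(\mathcal{F}_i,\nu_i)_{i \in N}$, with the required positivity of the priors on cells.

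The main obstacle is the measure-theoretic bookkeeping in the converse: defining the global $\sigma$-algebra $\mathcal{F}_i$ so that its restriction to each cell recovers the given $\mathcal{F}_{i,\oo}$, and checking countable additivity of the glued measure $\nu_i$. Countability of the partition is used exactly once, in choosing strictly positive weights $\alpha_C$ summing to $1$ (which is also what yields the conclusion $\nu_i(\Pi_i(\oo)) > 0$); with uncountably many cells no such weights exist, and in fact the converse can fail. Note that A4 is not needed for this proposition.
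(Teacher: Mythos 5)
Your proof is correct and follows essentially the same route as the paper: the converse is established by taking a positively-weighted countable convex combination of the cell measures $\mu_{i,\oo_C}$ (the paper uses weights $1/N_i$ in the finite case and $1/2^k$ in the countably infinite case, where you use general weights $\alpha_C>0$ summing to $1$). You supply more of the measure-theoretic bookkeeping than the paper does — in particular the explicit ``glued'' $\sigma$-algebra $\F_i$ and the verification that its restriction to each cell recovers $\F_{i,\oo}$ — but this coincides with the $\sigma$-algebra generated by $\cup_{\oo}\F_{i,\oo}$ that the paper invokes, so the arguments are the same in substance.
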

\begin{proof}
The first part is immediate. To prove the second claim, suppose that $M$
is a structure satisfying A1--A3. Let $\F_i$ be the
unique algebra generated by $\cup_{\oo \in \Oo} \F_{i,\oo}$. To define
$\nu_i$, if there are $N_i < \infty$ cells in the partition $\Pi_i$,
define $\nu_i(\oo) = \tfrac{1}{N_i} \mu_{i,\oo}(\oo)$. Otherwise, if the
collection $\Pi_i$ is countably infinite, order the elements of $\Pi_i$
as $p_i^1, p_i^2, \ldots$.  Choose some state $\oo_k \in p_i^k$ for each
$k$, with associated probability space ${\cal P}_i(\oo_k) =
(\Oo_{i,\oo_k}, \F_{i,\oo_k}, \mu_{i,\oo_k})$.
By A2, each choice of $\oo_k$ in $p_i^k$ gives the
same probability measure $\mu_{i,\oo_k}$.  Define $\nu_i = \sum_k
\tfrac{1}{2^k} \mu_{i,\oo_k}$.
It is easy to see that
$\nu_i$ is a probability measure on $\Oo$, and that
$M$ is generated by $(\F_1,\nu_1),\ldots, (\F_n,\nu_n)$.
\end{proof}

Note that the requirement that that $M$ is countably partitioned is necessary to
ensure that we can have
$\nu_i(\Pi_i(\oo))>0$ for each player $i$ and state $\oo$.

In light of Proposition~\ref{prop:priorgenerated}, when it is
convenient, we will talk of a structure satisfying A1--A3 as being
\emph{generated by} $(\F_1,\nu_1), \ldots, (\F_n,\nu_n)$.

The \emph{common-prior assumption} discussed in the introduction is essentially just the special case of
prior-generated beliefs where all the
priors are identical.
\commentout{
We make one additional technical assumption. To state this
assumption, we need one more definition. A state $\oo' \in
\Oo$ is \emph{$G$-reachable} from $\oo \in \Oo$, for $G \subseteq N$, if
there exists a
sequence $\oo_0, \ldots, \oo_m$ in $\Oo$ with $\oo_0 = \oo$ and
$\oo_m = \oo'$, and $i_1, \ldots, i_m \in G$ such that
$\oo_\ell \in \Pi_{i_\ell}(\oo_{\ell -1})$. Denote by $R_G(\oo)
\subseteq \Oo$ the set of states $G$-reachable from $\oo$.

\begin{definition} \label{def:CPA}
An epistemic probability structure $M = (\Oo, (\Pi_j)_{j \in
N}, ({\cal P}_j)_{j \in N}, (\pi_j)_{j \in N})$ satisfies the
\emph{common-prior assumption} (\emph{CPA}) if there exists a
probability space $(\Oo,\F,\nu)$ such that $M$ has prior-generated
beliefs generated by
$((\F,\nu), \ldots, (\F,\nu))$, and $\nu(R_N(\oo)) > 0$ for all $\oo \in
\Oo$.
\end{definition}
As shown by Halpern~\citeyear{Halpern_2002}, the assumption that $\nu(R_N(\oo))
> 0$ for each $\oo \in \Oo$ is needed for Aumann's
\citeyear{Au} impossibility result.
}

\subsection{Capturing ambiguity}

We use epistemic probability structures to give meaning to formulas.
Since primitive propositions are interpreted relative to players,
we must allow the interpretation of arbitrary formulas to depend on the
player as well. Exactly how we do this depends on what further assumptions we
make about what
players know about each other's interpretations.  There are many
assumptions that could be made. We focus on two of them here, ones that
we believe arise in applications of interest,
and then
reconsider them under the assumption that there may be some ambiguity
about the partitions.

\paragraph{Believing there is no ambiguity}

The first approach is appropriate for situations where players
may interpret statements differently, but it does not occur to them
that there is another way of interpreting the statement.
Thus, in this
model, if there is a public announcement, all players will think that
their interpretation of the announcement is common knowledge.
We write $(M,\oo,i) \vDasho \ffi$ to denote that $\ffi$ is true at
state $\oo$ according to player $i$
(that is, according to $i$'s interpretation of the primitive
propositions in $\ffi$).
The superscript $\ou$ denotes
\emph{outermost scope}, since the formulas are interpreted relative to
the ``outermost'' player, namely the player $i$ on the left-hand side of
$\vDasho$.  We define $\vDasho$, as usual, by induction.

If $p$ is a primitive proposition,
\[
(M,\oo,i) \vDasho p  \  \text{iff} \   \pi_i(\oo)(p) = \mathbf{true}.
\]
This just says that player $i$ interprets a primitive proposition $p$
according to his interpretation function $\pi_i$.  This clause is common
to all our approaches for dealing with ambiguity.

For conjunction and negation, as is standard,
\begin{gather*}
(M,\oo,i) \vDasho \neg \ffi  \  \text{iff}  \
(M,\oo,i) \not{\vDash}^{\ou} \ffi,\\
(M,\oo,i) \vDasho  \ffi \wedge \psi \   \text{iff} \
  (M,\oo,i) \vDasho \ffi \text{ and } (M,\oo,i) \vDasho \psi.
\end{gather*}

Now consider a probability formula of the form $a_1 \pr_j(\ffi_1) +
\ldots + a_k \pr_j(\ffi_k) \geq b$.
The key feature that distinguishes this semantics is how $i$ interprets
$j$'s beliefs.
This is where we capture the intuition that it does not
occur to $i$ that there is another way of interpreting the formulas other
than the way she does.  Let
\[[[\ffi]]^{\ou}_i = \{\oo: (M,\oo,i) \vDasho \ffi\}.\]
Thus, $[[\ffi]]^{\ou}_i$ is the event consisting of the set of states
where $\ffi$ is true, according to $i$.
Note that A1 and A3 guarantee that the restriction of $\Oo_{j,\oo}$ to $\Pi_i(\oo)$ belongs to $\F_{i,\oo}$.
Assume inductively that $[[\ffi_1]]^{\ou}_i \cap
\Omega_{j,\oo}, \ldots, [[\ffi_k]]^{\ou}_i \cap  \Omega_{j,\oo}
\in \F_{j,\oo}$.  The base case of this induction, where
$\ffi$ is a primitive proposition, is immediate from A3 and A4,
and the
induction assumption clearly extends to negations and conjunctions. We now define
\begin{multline*}\label{eq:prob_formula}
(M,\oo,i) \vDasho a_1 \pr_j(\ffi_1) + \ldots + a_k \pr_j(\ffi_k)
\geq b   \ \text{iff} \ \nonumber\\ a_1
\mu_{j,\oo}([[\ffi_1]]^{\ou}_i \cap \Omega_{j,\oo}) + \ldots + a_k
\mu_{j,\oo}([[\ffi_k]]^{\ou}_i \cap \Omega_{j,\oo}) \geq b.
\end{multline*}
Note that it easily follows from A2 that
$(M,\oo,i) \vDasho a_1 \pr_j(\ffi_1) + \ldots + a_k \pr_j(\ffi_k)
\geq b$ if and only if
$(M,\oo',i) \vDasho a_1 \pr_j(\ffi_1) + \ldots + a_k \pr_j(\ffi_k)
\geq b$ for all $\oo' \in \Pi_j(\oo)$.  Thus,
$[[a_1 \pr_j(\ffi_1) + \ldots + a_k \pr_j(\ffi_k) \geq b]]_i$ is a union of
cells of $\Pi_j$, and hence
$[[a_1 \pr_j(\ffi_1) + \ldots + a_k \pr_j(\ffi_k) \geq b]]_i \cap
\Omega_{j,\oo} \in \F_{j,\oo}$.

With this semantics,
according to player $i$, player $j$ assigns $\ffi$ probability
$b$ if and
only if the set of worlds where $\ffi$ holds according to $i$ has
probability $b$ according to $j$.
Intuitively, although $i$ ``understands'' $j$'s probability space, player
$i$ is not aware that $j$ may interpret $\ffi$ differently from the way
she ($i$) does. That $i$ understands $j$'s probability space
is plausible if we assume that there is a common prior and that $i$
knows $j$'s partition (this knowledge is embodied in the assumption that
$i$ intersects $[[\ffi_k]]^{\ou}_i$ with $\Omega_{j,\oo}$ when assessing
what probability $j$ assigns to $\ffi_k$).\footnote{Note that at state $\oo$, player $i$ will not in general know that it is state
$\oo$.  In particular, even if we assume that $i$ knows which element of
$j$'s partition contains $\oo$, $i$ will not in general know which of
$j$'s cells describes $j$'s current information.  But we assume that
$i$ does know that \emph{if} the state is $\oo$, then $j$'s information is
described by $\Oo_{j,\oo}$.  Thus, as usual, ``$(M,i,\oo) \vDasho
\ffi$'' should
perhaps be understood as ``according to $i$, $\ffi$ is true if the
actual world is $\oo$''. This interpretational issue arises even
without
ambiguity in the picture.}

Given our interpretation of probability formulas, the interpretation of
$B_j \ffi$ and $\EB^k \ffi$ follows.
For example,
\[
(M,\oo,i) \vDasho B_j \ffi \ \text{iff} \ \mu_{j,\oo}([[\ffi]]^{\ou}_i) = 1.
\]
For readers more used to belief defined in terms of a possibility
relation, note that if the probability measure $\mu_{j,\oo}$ is
\emph{discrete} (i.e., all sets are $\mu_{j,\oo}$-measurable, and
$\mu_{j,\oo}(E) = \sum_{\oo' \in E} \mu_{j,\oo}(\oo')$ for all
subsets $E \subset \Pi_j(\oo)$),
we can define
$\B_j = \{(\oo,\oo') : \mu_{j,\oo}(\oo') > 0\}$; that is, $(\oo,\oo')
\in \B_j$ if, in state $\oo$, agent $j$ gives state $\oo'$ positive
probability.  In that case,
$(M,\oo,i) \vDasho B_j \ffi$ iff $(M,\oo',i) \vDasho \ffi$
for all $\oo'$ such that $(\oo,\oo') \in \B_j$.
That is, $(M,\oo,i) \vDasho B_j \ffi$ iff $\ffi$ is true according to
$i$ in all the worlds to which $j$ assigns positive probability at $\oo$.

It is important to note that $(M,\oo,i) \vDash \ffi$ does not imply
$(M,\oo,i) \vDash B_i \ffi$: while $(M,\oo,i) \vDasho \ffi$ means
``$\ffi$ is true at $\oo$ according to $i$'s interpretation,'' this does
not mean that $i$ believes $\ffi$ at state $\oo$. The reason is that $i$
can be uncertain as to which state is the actual state. For $i$ to
believe $\ffi$ at $\oo$, $\ffi$ would have to be true (according to
$i$'s interpretation) at all states
to which $i$ assigns positive probability.

Finally, we define
\[
(M,\oo,i) \vDasho \CB_G \ffi  \  \text{iff}  \
(M,\oo,i) \vDasho \EB^k_G \ffi  \  \text{for $k = 1,2, \ldots$}
\]
for any nonempty subset $G \subseteq N$ of players.

\paragraph{Awareness of possible ambiguity}

We now consider the second way of interpreting formulas.  This is
appropriate for players who realize that other players may interpret
formulas differently.  We write
$(M,\oo,i) \vDashi \ffi$ to denote that $\ffi$ is true at state $\oo$
according to player $i$ using this interpretation, which is called
\emph{innermost scope}. The definition of $\vDashi$ is identical to
that of $\vDasho$ except for the interpretation of probability formulas.
In this case, we have
\begin{multline*}
(M,\oo,i) \vDashi a_1 \pr_j(\ffi_1) + \ldots + a_k \pr_j(\ffi_k)
\geq b  \  \text{iff} \ \nonumber \\ a_1
\mu_{j,\oo}([[\ffi_1]]^\inner_j \cap \Omega_{j,\oo}) + \ldots + a_k
\mu_{j,\oo}([[\ffi_k]]^\inner_j \cap \Omega_{j,\oo}) \geq b,
\end{multline*}
where $[[\ffi]]^\inner_j$ is the set of states $\oo$
such that $(M,\oo,j) \vDashi \ffi$.
Hence, according to player $i$, player $j$ assigns $\ffi$ probability
$b$ if and
only if the set of worlds where $\ffi$ holds according to $j$ has
probability $b$ according to $j$.  Intuitively, now $i$ realizes that
$j$ may interpret $\ffi$ differently from the way that she ($i$) does,
and thus assumes that $j$ uses his ($j$'s) interpretation to evaluate the
probability of $\ffi$.
Again, in the case that $\mu_{j,\oo}$ is discrete, this means that
$(M,\oo,i) \vDashi B_j\ffi$ iff $(M,\oo',j) \vDashi \ffi$ for all $\oo'$
such that $(\oo,\oo') \in \B_j$.

Note for future reference that if $\ffi$ is a probability formula or a
formula of the form
$\CB_G \ffi'$, then it is easy to see that
$(M,\oo,i) \vDashi \ffi$ if and only if $(M,\oo,j)\vDashi \ffi$; we
sometimes write $(M,\oo) \vDashi \ffi$ in this case.
Clearly, $\vDasho$ and $\vDashi$ agree in the common-interpretation
case, and we can write $\vDash$.
There is a sense in which innermost scope is able to capture the intuitions
behind outermost scope.  Specifically, we can capture the intuition that
player $i$ is convinced that all players interpret everything just as he
($i$) does by assuming that in all worlds $\oo'$ that player $i$
considers possible, $\pi_i(\oo') = \pi_j(\oo')$ for all players $j$.

\paragraph{Ambiguity about information partitions}

Up to now, we have assumed that players ``understand'' each other's
probability spaces.  This may not be so reasonable in the presence of
ambiguity and prior-generated beliefs.  We want to model the following
type of situation. Players receive information, or signals, about the
true state
of the world, in the form of strings (formulas). Each player understands
what signals he and other players receive in different states of the
world, but players may interpret signals differently. For instance, player
$i$ may understand that $j$ sees a red car if $\oo$ is the true state of
the world, but $i$ may or may not be aware that $j$ has a different
interpretation of ``red'' than $i$ does. In the latter case, $i$ does not have a full
understanding of $j$'s information structure.

We would like to think of a player's information as being characterized by a formula (intuitively,
the formula that describes the signals received).  Even if the formulas
that describe each information set are commonly known, in the presence
of ambiguity, they might be interpreted differently.

To make this precise, let $\Phi^*$ be the set of formulas that is
obtained from $\Phi$ by closing off under negation and conjunction.
That is, $\Phi^*$ consists of all propositional formulas that can be
formed from the primitive propositions in $\Phi$.
Since the formulas in $\Phi^*$ are not composed of probability
formulas, and thus do not involve any reasoning about interpretations,
we can extend the function $\pi_i(\cdot)$ to $\Phi^*$ in a
straightforward way, and write $[[\ffi]]_i$ for the set of the states of
the world where the formula $\ffi \in \Phi^*$ is true according to $i$.

The key new assumption that we make to model players' imperfect understanding
of the other players' probability spaces is that $i$'s partition cell at
$\oo$
is described by a formula $\ffi_{i,\oo} \in \Phi^*$.
Roughly speaking, this means that $\Pi_i(\omega)$ should consist of all
states where the formula $\ffi_{i,\oo}$ is true.
More precisely, we take $\Pi_i(\omega)$ to consist of all states
where $\phi_{i,\oo}$ is true according to $i$.
If player $j$ understands that $i$ may be using a different
interpretation than he does (i.e., the appropriate semantics are the
innermost-scope semantics), then $j$ correctly infers that the set of
states that $i$ thinks are possible in $\oo$ is $\Pi_i(\oo) =
[[\ffi_{i,\oo}]]_i$. But if $j$ does not understand that $i$ may
interpret formulas in a different way (i.e., under outermost scope),
then he thinks that the set of states that $i$ thinks are possible in
$\oo$ is given by $[[\ffi_{i,\oo}]]_j$.  Of course,
$[[\ffi_{i,\oo}]]_j$ does not in general coincide with $\Pi_i(\oo)$.
Indeed, $[[\ffi_{i,\oo}]]_j$ may even be empty.  If this happens, $j$
might well wonder if $i$ is interpreting things the same way that he
($j$) is.
In any case, we
require that $j$ understand that these formulas form a partition and
that $\omega$ belongs to $[[\ffi_{i,\oo}]]_j$.
Thus, we consider structures that
satisfy A1--A5, and possibly A6 (when we use outermost scope semantics).
\begin{description}
  \item[\textbf{A5.}] For each $i \in N$ and $\oo \in \Omega$, there is
  a formula $\ffi_{i,\oo} \in \Phi^*$ such that $\Pi_i(\oo) =
  [[\ffi_{i,\oo}]]_i$.
  \item[\textbf{A6.}] For each $i, j \in N$, the collection
  $\{[[\ffi_{i,\oo}]]_j: \oo \in \Oo\}$ is a partition of $\Omega$
and for all $\oo \in \Oo$, $\oo \in [[\ffi_{i,\oo}]]_j$.
\end{description}
Assumption A6 ensure that the signals for player $i$ define an
information partition according to every player $j$ when we consider the
outermost scope semantics. With innermost scope, this already follows
from A5 and the definition of $\Pi_i(\oo)$.

We can now define analogues of outermost scope and innermost scope
in the presence of ambiguous information. Thus, we define two more truth
relations, $\vDashos$ and $\vDashis$.
(The ``ai'' here stands for ``ambiguity of information''.)
The only difference between $\vDashos$ and $\vDasho$ is in the semantics of probability formulas. In giving the
semantics in a structure $M$, we assume that $M$ has prior-generated
beliefs, generated by $(\F_1,\nu_1),\ldots, (\F_n,\nu_n)$.
As we observed in Proposition \ref{prop:priorgenerated}, this
assumption is without loss of generality
as long as the structure is countably partitioned. However, the choice of prior beliefs \emph{is} relevant, as we shall
see, so we have to be explicit about them.
When $i$ evaluates $j$'s probability at a state
$\oo$, instead of using $\mu_{j,\oo}$, player $i$ uses $\nu_j(\cdot \mid
[[\ffi_{j,\oo}]]_i)$.  When $i=j$, these two approaches agree,
but in general they do not. Thus, assuming that $M$ satisfies A5 and A6 (which are the appropriate assumptions for the outermost-scope semantics),
we have
\[
\begin{array}{ll}
(M,\oo,i) \vDashos a_1 \pr_j(\ffi_1) + \ldots + a_k \pr_j(\ffi_k)
\geq b \   \text{iff} \\ a_1
\nu_j([[\ffi_1]]^\ous_i \mid [[\ffi_{j,\oo}]]^\ous_i) + \ldots\\
\ \ \  + a_k \nu_j([[\ffi_k]]^\ous_i \mid [[\ffi_{j,\oo}]]^\ous_i) \geq
b,
\end{array}
\]
where $[[\psi]]^\ous_i = \{\oo': (M,\oo,i) \vDashos \psi\}$.

That is, at $\oo \in \Oo$, player $j$ receives the
information (a string) $\ffi_{j,\oo}$, which he interprets as
$[[\ffi_{j,\oo}]]_j$. Player $i$ understands that $j$ receives the
information $\ffi_{j,\oo}$ in state $\oo$, but interprets this as
$[[\ffi_{j,\oo}]]_i$. This models a situation such as the following. In
state $\oo$, player $j$ sees a red car, and thinks possible all states of
the world where he sees a car that is red (according to $j$).
Player $i$ knows that at world $\oo$ player $j$ will see a red car
(although she may not know that the actual world is $\oo$, and thus does
not know what color of car player $j$ actually sees).  However, $i$ has a
somewhat different interpretation of ``red car''
(or, more precisely, of $j$ seeing a red car)
than $j$; $i$'s
interpretation corresponds to the event $[[\ffi_{j,\oo}]]_i$.  Since $i$
understands that $j$'s beliefs are determined by conditioning her prior
$\nu_j$ on her information, $i$ can compute what she believes $j$'s
beliefs are.

We can define $\vDashis$ in an analogous way. Thus, the semantics
for formulas that do not involve probability formulas are as given by
$\vDashi$, while the semantics of probability formulas is defined as
follows (where $M$ is assumed to satisfy A5, which is the appropriate
assumption for the innermost-scope semantics):
\[
\begin{array}{ll}
(M,\oo,i) \vDashis a_1 \pr_j(\ffi_1) + \ldots + a_k \pr_j(\ffi_k)
\geq b  \  \text{iff} \\
a_1
\nu_j([[\ffi_1]]^\inners_j \mid [[\ffi_{j,\oo}]]^\inners_j) + \ldots \\
\ \ \ + a_k \nu_j([[\ffi_k]]^\inners_j \mid [[\ffi_{j,\oo}]]^\inners_j) \geq b.
\end{array}
\]
Note that although we have written $[[\ffi_{j,\oo}]]^\inners_i$, since
$\ffi_{j,\oo}$ is a propositional formula, $[[\ffi_{j,\oo}]]^\inners_i =
[[\ffi_{j,\oo}]]^\ous_i = [[\ffi_{j,\oo}]]^\ou_i =
[[\ffi_{j,\oo}]]^\inner_i$.
It is important that $\ffi_{j,\oo}$ is a propositional formula here;
otherwise, we would have circularities in the definition, and would
somehow need to define $[[\ffi_{j,\oo}]]^\inners_i$.

Again, here it may be instructive to consider the definition of
$B_j \ffi$ in the case that $\mu_{j,\oo}$ is discrete for all $\oo$.
In this case, $\B_j$ becomes the set $\{(\oo,\oo'): \nu_j( \oo' \mid
[[\ffi_{j,\oo}]]^\inners_j) > 0$.
That is, state $\oo'$ is considered possible by player $j$ in state
$\oo$ if agent $j$ gives $\oo'$ positive probability after
conditioning his prior $\nu_j$ on (his
interpretation of) the information $\ffi_{j,\oo}$ he receives in state $\oo$.
With this definition of $\B_j$, we have, as expected,
$(M,\oo,i) \vDashis B_j \ffi$ iff $(M,\oo',i) \vDashis \ffi$ for all
$\oo'$ such that $(\oo,\oo') \in \B_j$.

The differences in the different semantics
arise only when we consider
probability formulas. If we go back to our example with the red car, we now have a situation
where player $j$ sees a red car in state $\oo$, and thinks possible all
states where he sees a red car. Player $i$ knows that in state $\oo$,
player $j$ sees a car that he ($j$) interprets to be red, and that this
determines his posterior. Since $i$ understands $j$'s notion of seeing a
red car, she has a correct perception of $j$'s posterior in each
state of the world. Thus, the semantics for $\vDashis$ are identical to
those for $\vDashi$
(restricted to the class of structures with prior-generated beliefs that
satisfy A5), though the information partitions are not
predefined, but rather generated by the signals.

Note that, given an epistemic structure $M$ satisfying A1--A4, there are many
choices for $\nu_i$ that allow $M$ to be viewed as being generated by
prior beliefs.  All that is required of $\nu_j$ is that
for all $\oo \in \Oo$ and $E \in \F_{j,\oo}$ such that $E \subseteq [[\ffi_{j,\oo}]]^\ous_j$, it holds that
$\nu_j(E \cap [[\ffi_{j,\oo}]]^\ous_j )/\nu_j([[\ffi_{j,\oo}]]^\ous_j) = \mu_{j,\oo}(E)$.
\commentout{
It easily follows that if
$E, E' \subseteq [[\ffi_{j,\oo}]]^\ous_j = \Pi_j(\oo)$ and $E, E' \in\F_{j,\oo}$,
then if $\nu_j$ and $\nu'_j$ generate the same posterior beliefs for
$j$ and $\nu_j(E') \ne 0$, then $\nu_j'(E') \ne 0$, and
$\nu_j(E)/\nu_j(E') = \nu_j'(E)/\nu_j'(E')$.
}
However, because $[[\ffi_{j,\oo}]]^\ous_i$ may not be a subset of
$[[\ffi_{j,\oo}]]^\ous_j = \Pi_j(\oo)$,
we can have two prior probabilities $\nu_j$ and $\nu_j'$ that
generate the same posterior beliefs for $j$, and still have
$\nu_j([[\ffi_k]]^\ous_i \mid [[\ffi_{j,\oo}]]^\ous_i) \ne \nu_j'([[\ffi_k]]^\ous_i \mid [[\ffi_{j,\oo}]]^\ous_i)$ for some formulas $\ffi_k$.  Thus, we must
be explicit about our choice of priors here.

\section{Common interpretations suffice}\label{sec:equivalence}

In this section, we show in there is a sense in which we do not need
structures with ambiguity.
Specifically, we show that the same
formulas are valid in common-interpretation structures as in structures
that do not have a common interpretation, no matter what semantics we
use, even if we have ambiguity about information partitions.

To make this precise, we need some notation.
Fix a nonempty, countable set $\Psi$ of primitive propositions, and let
$\M(\Psi)$ be the class of all structures that satisfy A1--A4 and that are
defined over some nonempty subset $\Phi$ of $\Psi$ such that $\Psi
\setminus \Phi$ is countably infinite.\footnote{Most of our results hold if we just consider the set of
structures defined over some fixed set $\Phi$ of primitive propositions.
However, for one of our results, we need to be able to add fresh
primitive propositions to the language.  Thus, we allow the
set $\Phi$ of primitive propositions to vary over the
structures we consider, but require $\Psi \setminus \Phi$ to be countably
infinite so that there are always ``fresh'' primitive propositions that
we can add to the language.}
Given a subset $\Phi$ of $\Psi$, a formula $\ffi \in {\cal L}_n^C(\Phi)$, and a structure $M \in \M(\Psi)$ over $\Phi$, we say that
$\ffi$ is \emph{valid in $M$ according to outermost scope},
and write $M \vDasho \ffi$, if $(M,\oo,i)\vDasho \ffi$ for all
$\oo \in \Oo$ and $i \in N$.
Given $\ffi \in \Psi$, say that  $\ffi$ is \emph{valid according to
outermost scope} in a class $\N \subseteq \M(\Psi)$ of structures, and write $\N \vDasho \ffi$,
if $M \vDasho \ffi$ for all $M \in \N$ defined over a set $\Phi \subset \Psi$ of primitive
propositions that includes all the primitive
propositions that appear in $\ffi$.

We get analogous definitions by replacing $\vDasho$ by
$\vDashi$, $\vDashos$ and $\vDashis$ throughout (in the latter two
cases, we have to restrict $\N$ to structures that satisfy
A5 and A6 or just A5, respectively, in addition to A1--A4).
Finally, given a class of structures
$\N$, let $\N_c$ be the subclass of $\N$ in
which players have a common interpretation. Thus, $\M_c(\Psi)$ denotes the structures in $\M(\Psi)$ with a common interpretation.
Let $\Mis(\Psi)$ denote all structures in $\M(\Psi)$ with prior-generated
beliefs that satisfy A5 and A6 (where we assume that the prior $\nu$
that describes the initial beliefs is given explicitly).\footnote{For ease of exposition, we assume A6 even when dealing with
innermost scope.
}
\commentout{
Finally, let $\Mcpa(\Psi)$ (resp., $\Miscpa(\Psi)$)
consist of the structures in $\M(\Psi)$ (resp., $\Mis(\Psi)$) satisfying
the CPA.
}

\begin{theorem}\label{pro:equivalence}
For all formulas $\ffi \in {\cal L}_n^C(\Psi)$,
the following are equivalent:
\begin{itemize}
\item[(a)] $\M_c(\Psi) \vDash \ffi$;
\item[(b)] $\M(\Psi) \vDasho \ffi$;
\item[(c)] $\M(\Psi) \vDashi \ffi$;
\item[(d)] $\Mis_c(\Psi) \vDash \ffi$;
\item[(e)] $\Mis(\Psi) \vDashos \ffi$;
\item[(f)] $\Mis(\Psi) \vDashis \ffi$.
\end{itemize}
\end{theorem}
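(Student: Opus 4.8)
The plan is to establish the ten implications (b)$\Rightarrow$(a), (c)$\Rightarrow$(a), (e)$\Rightarrow$(d), (f)$\Rightarrow$(d), (a)$\Rightarrow$(d) (all trivial), together with (a)$\Rightarrow$(b), (a)$\Rightarrow$(c), (c)$\Rightarrow$(f), (d)$\Rightarrow$(e), and (d)$\Rightarrow$(a); a direct check shows the resulting digraph on $\{(a),\dots,(f)\}$ is strongly connected, so all six conditions are equivalent. The five trivial arrows all rest on the inclusions $\Mis_c\subseteq\M_c\subseteq\M$ and $\Mis_c\subseteq\Mis\subseteq\M$ together with the fact that $\vDash$, $\vDasho$, $\vDashi$, $\vDashos$, and $\vDashis$ all coincide on common-interpretation structures: e.g.\ if $\ffi$ is valid under $\vDasho$ throughout $\M(\Psi)$, then in particular it is valid under $\vDash$ throughout the subclass $\M_c(\Psi)$. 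For (c)$\Rightarrow$(f) I would moreover use the observation (already noted in the text) that $\vDashi$ and $\vDashis$ agree on structures in $\Mis$, since in such a structure $\mu_{j,\oo}=\nu_j(\cdot\mid\Pi_j(\oo))$ and $\Pi_j(\oo)=[[\ffi_{j,\oo}]]_j$, so conditioning $\nu_j$ on $[[\ffi_{j,\oo}]]_j$ reproduces $\mu_{j,\oo}$.

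For (a)$\Rightarrow$(b): given $M\in\M(\Psi)$ with $(M,\oo,i)\not\vDasho\ffi$, let $M_i$ be $M$ with every interpretation $\pi_j$ replaced by $\pi_i$. Then $M_i$ is a common-interpretation structure still satisfying A1--A4, and an easy induction on formulas (the point being that under outermost scope the ``evaluating player'' is never reassigned, so only $\pi_i$ is ever consulted) gives $(M_i,\oo',k)\vDash\chi$ iff $(M,\oo',i)\vDasho\chi$ for all $\chi,\oo',k$; in particular $(M_i,\oo,i)\not\vDash\ffi$, contradicting (a). For (a)$\Rightarrow$(c) the evaluating player does change, so I would instead build $M'$ on state space $\Oo\times N$ with the single interpretation $\pi'(\oo,m)=\pi_m(\oo)$, with $\Pi'_j(\oo,m)=\Pi_j(\oo)\times N$, and with $\mu'_{j,(\oo,m)}$ the pushforward of $\mu_{j,\oo}$ onto the $j$-labelled copies $\Pi_j(\oo)\times\{j\}$ (taking the product $\sigma$-algebra on each cell). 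After checking that $M'$ is a common-interpretation structure satisfying A1--A4, one proves $(M',(\oo,m),k)\vDash\chi$ iff $(M,\oo,m)\vDashi\chi$ for all $\chi$; the decisive case is a probability formula $a_1\pr_j(\psi_1)+\cdots\geq b$, where $\mu'_{j,(\oo,m)}$ is concentrated on $j$-labelled states, which carry exactly $j$'s interpretation, so by the induction hypothesis $\mu'_{j,(\oo,m)}([[\psi_l]]'\cap\Pi'_j(\oo,m))=\mu_{j,\oo}([[\psi_l]]^{\inner}_j\cap\Pi_j(\oo))$. Again this contradicts (a) when $\ffi$ fails under $\vDashi$.

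For (d)$\Rightarrow$(e): if $M\in\Mis(\Psi)$ is generated by priors $(\F_j,\nu_j)$ and $(M,\oo,i^*)\not\vDashos\ffi$, observe that the $\vDashos$-clauses evaluated at $i^*$ consult only $\pi_{i^*}$, the priors $\nu_j$, and the sets $[[\ffi_{j,\oo}]]_{i^*}$. So I would form $M'$ with the same states, the single interpretation $\pi_{i^*}$, the same priors and the same partition-describing formulas $\ffi_{j,\oo}$, but with player $j$'s information partition \emph{redefined} to be $\{[[\ffi_{j,\oo}]]_{i^*}:\oo\in\Oo\}$ (genuinely a partition of $\Oo$ with $\oo\in[[\ffi_{j,\oo}]]_{i^*}$, by A6) and $\mu'_{j,\oo}=\nu_j(\cdot\mid[[\ffi_{j,\oo}]]_{i^*})$. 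Then $M'\in\Mis_c(\Psi)$ --- A5 holds with the \emph{same} formulas, since interpretations now coincide; A6 and A1--A3 are immediate; prior-generated beliefs are built in --- and $(M',\oo',k)\vDash\chi$ iff $(M,\oo',i^*)\vDashos\chi$ by induction, contradicting (d). Finally, for (d)$\Rightarrow$(a): if $\ffi$ fails somewhere in $\M_c(\Psi)$, first pass to a \emph{finite} common-interpretation structure falsifying $\ffi$ (by the finite model property of the logic, via a standard filtration through the subformulas of $\ffi$), then enlarge $\Phi$ by a fresh primitive proposition $q_{i,C}$ for each player $i$ and each of the now finitely many cells $C\in\Pi_i$, interpreted by everyone as $C$; with $\ffi_{i,\oo}:=q_{i,\Pi_i(\oo)}$ this yields A5, common interpretation yields A6, and Proposition~\ref{prop:priorgenerated} yields prior-generated beliefs, so the structure lies in $\Mis_c(\Psi)$ and still falsifies $\ffi$, contradicting (d). (One uses only countably many of the available fresh propositions, keeping $\Psi$ minus the new primitive-proposition set countably infinite.)

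The main obstacles are two. First, the product construction for (a)$\Rightarrow$(c) must be set up so that A1--A4 survive on $\Oo\times N$ --- in particular so that $j$'s cell still measures every other player's cell and every set $[[p]]'$, intersected as needed --- while the correspondence with $\vDashi$ goes through for probability formulas and hence, via the $\EB^k_G$ unwindings (which involve only probability formulas), for common-knowledge formulas. Second, and more essentially, the finite-model reduction in (d)$\Rightarrow$(a) cannot be avoided: A5 forces each information partition to have only countably many cells (each must be named by a propositional formula, of which there are only countably many), so one cannot graft A5 onto an arbitrary common-interpretation structure without first shrinking it, and this is exactly where the finite model property for this probability-plus-common-knowledge logic is needed.
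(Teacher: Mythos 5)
Your proposal is correct in outline and, for the two substantive constructions, coincides with the paper's: (a)$\Rightarrow$(b) is proved there exactly as you do, by replacing every $\pi_j$ with $\pi_i$, and (a)$\Rightarrow$(c) by the same disjoint-copies construction (the paper uses $\Oo_1\cup\cdots\cup\Oo_n$ where you use $\Oo\times N$, with $\mu'_{i,\oo_j}$ concentrated on the $i$-labelled copy). Where you genuinely diverge is in closing the cycle through (d). The paper proves (d)$\Rightarrow$(f) by repeating the disjoint-copies argument, whereas you derive (c)$\Rightarrow$(f) from the observation that $\vDashi$ and $\vDashis$ coincide on $\Mis$; since the paper itself records that observation, your arrow is a legitimate and arguably cleaner shortcut. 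Your (d)$\Rightarrow$(e) is \emph{more} careful than the paper's, which merely asserts it is ``essentially identical'' to (a)$\Rightarrow$(b): as you note, one must also redefine $\Pi'_j$ as $\{[[\ffi_{j,\oo}]]_{i^*}:\oo\in\Oo\}$ and reset $\mu'_{j,\oo}=\nu_j(\cdot\mid[[\ffi_{j,\oo}]]_{i^*})$ for A5 to survive in the common-interpretation structure, and your write-up supplies exactly this missing step. The real difference is (d)$\Rightarrow$(a). The paper does not invoke a finite model property: given a falsifying state $\oo$ in $M\in\M_c(\Psi)$, it restricts $M$ to the reachable part $R_N(\oo)$, asserts that only countably many information cells survive, and names each cell $\Pi_i(\oo')$ by a fresh primitive proposition $p_{i,\oo'}$ drawn from $\Psi\setminus\Phi$ (this is the one place the countably infinite reservoir of fresh propositions is used). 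Your route via a finite falsifying model accomplishes the same cardinality reduction, but be aware that ``a standard filtration'' understates what is needed: for ${\cal L}_n^C$ one needs a small-model theorem handling both the linear probability inequalities and the $\CB_G$ fixpoint (in the style of Fagin and Halpern), so you are importing a substantially heavier external result than the paper does. On the other hand, the paper's claim that $R_N(\oo)$ meets only countably many cells is itself delicate for arbitrary (not countably partitioned) structures, so neither resolution of this step is free; yours is sound provided you cite the small-model theorem rather than gesture at filtration.
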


\begin{proof}
Since the set of structures with a common interpretation
is a subset of the set of structures, it is immediate that (c) and (b)
both imply (a). Similarly, (e) and (f) both imply (d).
The fact that (a) implies (b) is also immediate. For
suppose that $\M_c(\Psi) \vDash \ffi$ and that
$M = (\Oo, (\Pi_j)_{j \in N},
({\cal P}_j)_{j \in N}, (\pi_j)_{j \in N}) \in  \M(\Psi)$
is a structure
over a set $\Phi \subset \Psi$ of primitive propositions that contains
the primitive propositions that appear in $\ffi$. We must show
that $M \vDasho \ffi$.  Thus, we must show
that $(M,\oo,i) \vDasho \ffi$ for all $\oo \in \Oo$ and $i \in N$. Fix
$\oo \in \Oo$ and $i \in N$, and
let $M'_i = (\Oo, (\Pi_j)_{j \in N},
({\cal P}_j)_{j \in N}, (\pi'_j)_{j \in N})$, where
$\pi'_j = \pi_i$ for all $j$. Thus, $M'_i$ is a common-interpretation
structure over $\Phi$, where the
interpretation coincides with $i$'s interpretation in
$M$. Clearly $M'_i$ satisfies A1--A4, so $M'_i \in \M_c(\Psi)$. It is easy to
check that $(M,\oo,i) \vDasho \psi$ if and only if
$(M'_i,\oo,i) \vDash \psi$ for all states $\oo \in \Oo$ and all formulas
$\psi \in {\cal L}_n^C(\Phi)$. Since $M'_i \vDash \ffi$, we must have
that $(M,\oo,i) \vDasho \ffi$, as desired.

To see that (a) implies (c), given a structure
$M = (\Oo, (\Pi_j)_{j \in N}, ({\cal P}_j)_{j \in N}, (\pi_j)_{j \in N})
\in  \M(\Psi)$ over some set $\Phi \subset \Psi$ of primitive
propositions and a player $j \in N$, let $\Oo_j$ be a disjoint copy of
$\Oo$; that is, for every
state $\oo \in \Oo$, there is a corresponding state $\oo_j \in \Oo_j$.
Let $\Oo' = \Oo_1 \cup \ldots \cup \Oo_n$.
Given $E \subseteq \Oo$, let the corresponding subset $E_j \subseteq
\Oo_j$ be the set $\{\oo_j: \oo \in E\}$, and let $E'$ be the subset of
$\Oo'$ corresponding to $E$, that is, $E' = \{\oo_j: \oo \in E, j \in
N\}$.

Define $M' = (\Oo', (\Pi'_j)_{j \in N}, ({\cal P}'_j)_{j \in N},
(\pi'_j)_{j \in N})$, where  $\Oo' = \Oo_1 \cup \ldots \cup \Oo_n$ and, for
all $\oo \in \Oo$ and $i,j \in N$, we have
\begin{itemize}
\item $\Pi'_i(\oo_j) = (\Pi_i(\oo))'$;
\item $\pi_i(\oo_j)(p) = \pi_j(\oo)(p)$ for a primitive proposition $p
\in \Phi$;
\item ${\cal P}'_i(\oo_j) = (\Oo'_{i,\oo_j}, \F'_{i,\oo_j}, \mu'_{i,\oo_j})$, where $\Oo'_{i,\oo_j} = \Oo_{i,\oo}'$,
$\F'_{i,\oo_j} = \{E_\ell: E \in \F_{i,\oo}, \ell \in N\}$,
$\mu'_{i,\oo_j}(E_i) = \mu_{i,\oo}(E)$,
$\mu'_{i,\oo_j}(E_\ell) = 0$ if $\ell \ne i$.
\end{itemize}

Thus, $\pi_1 = \cdots = \pi_n$, so that $M'$ is a common-interpretation
structure; on a state $\oo_j$, these interpretations are
all determined by $\pi_j$.  Also note that the support of the probability
measure $\mu'_{i,\oo_j}$ is contained in $\Oo_i$, so for different
players $i$, the probability measures $\mu'_{i,\oo_j}$ have disjoint
supports. Now an easy induction on the structure of formulas shows that$(M',\oo_j) \vDash \psi$ if and only if $(M,\oo,j) \vDashi \psi$ for any formula $\psi \in {\cal L}_n^C(\Phi)$.  It easily
follows that if $M' \vDash \ffi$, then $M \vDashi \ffi$ for all $\ffi \in {\cal L}_n^C(\Phi)$.

The argument that (d) implies (e) is essentially identical to the
argument that (a) implies (b); similarly, the argument that
(d) implies (f) is essentially the same as the argument that (a) implies
(c).  Since $\Mis_c(\Psi) \subseteq \M_c(\Psi)$, (a) implies (d).  To show that (d)
implies (a), suppose that $\Mis_c(\Psi) \vDash \ffi$ for some formula
$\ffi \in {\cal L}_n^C(\Psi)$. Given a structure
$M = (\Oo, (\Pi_j)_{j \in N}, ({\cal P}_j)_{j \in N}, \pi)\in \M_c(\Psi)$ %
over a set $\Phi \subset \Psi$ of primitive propositions that includes
the primitive
propositions that appear in $\ffi$, we want to show that
$(M,\oo,i) \vDash \ffi$ for each state $\oo \in \Oo$ and player $i$.  Fix
$\oo$.  Recall that $R_N(\oo)$ consists of the set of states $N$-reachable
from $\oo$.  Let $M' = (R_N(\oo), (\Pi'_j)_{j \in N}, ({\cal P}'_j)_{j \in N}, \pi')$,
with $\Pi'_j$ and ${\cal P}'_j$ the restriction of $\Pi_j$ and ${\cal
P}_j$, respectively, to the states
in $R_N(\oo)$, be a structure over a set $\Phi'$ of primitive
propositions, where $\Phi'$ contains $\Phi$ and
new primitive propositions that we call
$p_{i,\oo}$ for each player $i$ and state $\oo \in
R_N(\oo)$.\footnote{This is the one argument that needs the assumption that the
set of primitive propositions can be different in different structures in
$\M(\Psi)$, and the fact that every $\Psi \setminus \Phi$ is countable.
We have
assumed for simplicity that the propositions $p_{i,\oo}$ are all in
$\Psi \setminus \Phi$, and that they can be chosen in such a way so that
$\Psi \setminus (\Phi \cup \{p_{i,\oo}: i \in \{1,\ldots,n\}, \oo \in
\Omega\})$  is
countable.}
Note that there are only countably many information sets in $R_N(\oo)$, so
$\Phi'$ is countable. Define $\pi'$ so that it agrees with $\pi$
(restricted to $R_N(\oo)$) on
the propositions in $\Phi$, and so that $[[p_{i,\oo}]]_i = \Pi_i(\oo)$.  Thus,
$M'$ satisfies A5 and A6. It is easy to check that, for all $\oo' \in R_N(\oo)$ and
all formulas $\psi \in {\cal L}_n^C(\Phi)$, we have that $(M,\oo',i)
\vDash \psi$ iff
$(M',\oo',i) \vDash \psi$.  Since $M' \vDash \ffi$, it follows that
$(M,\oo,i) \vDash \ffi$, as desired.
\end{proof}

\commentout{
The proof that (a) implies (c)
shows that, starting from an arbitrary structure $M$, we can
construct a common-interpretation structure $M'$ that is equivalent to
$M$ in the sense that the same formulas hold in both models.
Note that because the probability measures in the structure $M'$
constructed in the proof of Theorem~\ref{pro:equivalence} have
disjoint support,
$M'$ does not satisfy the CPA, even if the original
structure $M$ does.
As the next result shows, this is not an accident.
\begin{proposition}\label{pro:equivalence1}
For all formulas $\ffi \in {\cal L}_n^C(\Psi)$, if
either $\Mcpa(\Psi) \vDasho \ffi$, $\Mcpa(\Psi) \vDashi \ffi$,
$\Miscpa(\Psi) \vDashos \ffi$, or $\Miscpa(\Psi) \vDashis \ffi$,
then $\Mcpa_c(\Psi) \vDash \ffi$.
Moreover, if $\Mcpa_c(\Psi) \vDash \ffi$, then $\Mcpa(\Psi) \vDasho \ffi$ and
$\Miscpa(\Psi) \vDashos \ffi$.
However,
in general, if  $\Mcpa_c(\Psi) \vDash \ffi$, then it may not be the case that
$\Mcpa(\Psi) \vDashi \ffi$.
\end{proposition}
\begin{proof} All the implications are straightforward, with proofs along the
same lines as that of Proposition~\ref{pro:equivalence}.
To prove the last claim, let $p \in \Psi$ be a primitive proposition.
Aumann's agreeing to disagree result shows that
$\Mcpa_c(\Psi) \vDash \neg \CB_G (B_1 p  \wedge B_2 \neg p)$, while
Example~\ref{exam:AtD} shows that $\Mcpa(\Psi)
\not{\vDash}^{\mathit{in}} \neg \CB_G (B_1 p  \wedge B_2 \neg p)$.
\end{proof}

Proposition~\ref{pro:equivalence1} depends on the fact that we are
considering belief and
common belief rather than knowledge and common knowledge,
where knowledge is defined in the usual way, as truth in all
possible worlds:
\[
(M,\oo,i) \vDashi K_i\ffi \ \text{ iff } \ (M,\oo',i) \vDashi \ffi \text{ for all $\oo' \in \Pi_i(\oo)$,}
\]
with $K_i$ the knowledge operator for player $i$, and where we have assumed that $\oo \in \Pi_i(\oo)$ for all $i \in N$ and $\oo \in \Oo$.
Aumann's
result holds if we consider common belief (as long as what we are
agreeing about are judgments of probability and expectation). With
knowledge, there are formulas that are valid with a common
interpretation that are not valid under innermost-scope
semantics when there is ambiguity.For example,
$\M_c(\Psi) \vDash
K_i \ffi \Rightarrow \ffi$, while it is easy to construct a structure
$M$ with ambiguity such that $(M,\oo,1) \vDashi K_2 p \wedge \neg p$.
What is true is that $\M(\Psi) \vDashi (K_1 \ffi \Rightarrow \ffi) \vee
\ldots \vee (K_n \ffi \Rightarrow \ffi)$.
This is because we have $(M,\oo,i) \vDashi K_i \ffi \Rightarrow \ffi$,
so one of $K_1 \ffi \Rightarrow \ffi$, \ldots, $K_n \ffi \Rightarrow
\ffi$ must hold. As shown in \cite{Hal43}, this axiom essentially
characterizes knowledge if there is ambiguity.

As noted above, the proof of Proposition~\ref{pro:equivalence} demonstrates
that, given a structure $M$ with ambiguity and a common prior, we can
construct an equivalent common-interpretation structure $M'$ with
heterogeneous priors, where $M$ and $M'$ are said to be \emph{equivalent
(under innermost scope)} if for every formula $\psi$, $M \vDashi \psi$ if
and only if $M' \vDashi \psi$. The converse does not hold, as the next
example illustrates: when formulas are interpreted using innermost
scope, there is a common-interpretation structure with heterogeneous
priors that cannot be converted into an equivalent structure with
ambiguity that satisfies the CPA.
\begin{example} \label{exam:no_equiv}
We construct a structure $M$ with heterogeneous priors for which there
is no equivalent ambiguous structure that satisfies the CPA. The
structure $M$ has three players, one primitive proposition
$p$, and two states, $\oo_1$ and $\oo_2$. In $\oo_1$, $p$ is
true according to all players; in $\oo_2$, the proposition is
false. Player 1 knows the state: his information partition is $\Pi_1 =
\{\{\oo_1\},\{\oo_2\}\}$. The other players have no information on the
state, that is, $\Pi_i = \{\{\oo_1, \oo_2\}\}$ for $i = 2,3$. Player $2$
assigns probability $\tfrac{2}{3}$ to $\oo_1$, and player 3 assigns
probability $\tfrac{3}{4}$ to $\oo_1$. Hence, $M$ is a
common-interpretation structure with heterogeneous priors. We claim that
there is no equivalent structure $M'$ that satisfies the CPA.

To see this, suppose that $M'$ is an equivalent structure that satisfies
the CPA, with a common prior $\nu$ and a state space $\Oo'$.
As $M' \vDashi \pr_2(p) = \tfrac{2}{3}$ and
$M' \vDashi \pr_3( p) = \tfrac{3}{4}$, we must have
\begin{gather*}
\nu(\{\oo' \in \Oo': (M',\oo',2) \vDashi p\}) = \tfrac{2}{3},\\
\nu(\{\oo' \in \Oo': (M',\oo',3) \vDashi p\}) = \tfrac{3}{4}.
\end{gather*}
Observe that $M \vDashi B_2(p \iff B_1p) \land B_3(p \iff B_1p)$.
Thus, we must have $M' \vDashi B_2(p \iff B_1p) \land B_3(p \iff B_1p)$.
Let $E = \{\oo' \in \Oo': (M',\oo',1) \vDashi B_1 p\}$.  It follows that we
must have $\nu(E) = 2/3$ and $\nu(E) = 3/4$, a contradiction.
\end{example}

Example \ref{exam:no_equiv} demonstrates that there is no structure
$M'$ that is equivalent to the structure $M$ (defined in Example
\ref{exam:no_equiv}) that satisfies the CPA. In fact, as we show now,
an even stronger result holds: In any structure $M'$ that is equivalent
to $M$, whether it satisfies the CPA or not, players
have a common interpretation.
\begin{proposition}\label{prop:equiv_struct_is_common_interpr_struct}
If a structure $M' \in \M(\Psi)$ is equivalent
under innermost scope
to the structure $M$
defined in Example \ref{exam:no_equiv}, then
$M' \in \M_c$.
\end{proposition}
\begin{proof}
Note that $M \vDash p \iff (\pr_1(p)=1)$. Hence, if a structure $M'$ is
equivalent to $M$, we must have that $M' \vDashi p \iff (\pr_1(p)=1)$,
that is, for all $\oo \in \Oo$ and $i \in N$, $(M',\oo,i) \vDashi p \iff
(\pr_1(p)=1)$. By a similar argument, we obtain that for every $\oo \in
\Oo$ and $i \in N$, it must be the case that $(M',\oo,i) \vDashi \neg p \iff
(\pr_1(\neg p)=1)$. Since the truth of a probability formula does not
depend on the player under the innermost-scope semantics, it follows
that for each $i,j \in N$, the interpretations $\pi_i'$ and $\pi_j'$ in
$M'$ coincide. In other words, $M'$ is a common-interpretation
structure.
\end{proof}
}

From Theorem~\ref{pro:equivalence} it follows that for formulas in
${\cal L}_n^C(\Psi)$, we can get the same axiomatization with respect
to structures in $\M(\Psi)$ for both the $\vDasho$ and $\vDashi$
semantics; moreover, this axiomatization is the same as that for the
common-interpretation case.  An axiomatization for this case is already
given in \cite{FH3};
there is also a complete characterization of the complexity of
determining whether a formula is valid.

\commentout{
Things get more interesting if we consider
$\Mcpa(\Psi)$, the structures that satisfy the CPA.
Halpern  \citeyear{Halpern_2002}
provides an axiom that says that it cannot be common knowledge that
players disagree in expectation, and shows that it can be used to obtain
a sound and complete characterization of common-interpretation
structures with a common prior.  (The axiomatization is actually given
for common knowledge rather than common belief, but a similar result
holds with common belief.) By Proposition~\ref{pro:equivalence1},
the axiomatization remains sound for outermost-scope semantics if
we assume the CPA. However, using Example~\ref{exam:no_equiv}, we can
show that this is no longer the case for the innermost-scope semantics.
The set of formulas valid for innermost-scope semantics in the class of
structures satisfying the CPA is strictly between the set of formulas
valid in all structures and the set of formulas valid for
outermost-scope semantics in the class of structures satisfying the CPA. Finding an elegant complete
axiomatization remains an open problem.
}
However, the equivalence in Theorem \ref{pro:equivalence} does not
extend to subclasses of $\M$, $\M_c$, and $\Mis$. As shown in
our companion paper
\cite{HK13}, the equivalence result does not hold if we consider the
innermost scope semantics and
restrict attention to the subclasses of $\M$ and $\M_c$ that satisfy the
common-prior assumption.
We defer a further discussion of the modeling implications of this result to
Section~\ref{sec:concl}.

\section{A more general language}\label{sec:genlanguage}
Although, when considering innermost scope, we allowed for agents that
were sophisticated enough to realize that different agents might
interpret things in different ways, our syntax did not reflect that
sophistication.  Specifically, the language does not allow the modeler
(or the agents) to reason about how other agents interpret formulas.  Here
we consider a language that is rich enough to allow this.  Specifically,
we have primitive propositions of the form $(p,i)$, that can be
interpreted as ``$i$'s interpretation of $p$.''  With this extended
language, we do not need to have a different interpretation function
$\pi_i$ for each $i$; it suffices in a precise sense to use a single
(common) interpretation function.  We now make this precise, and show
that this approach is general enough to capture both outermost and
innermost scope.

More precisely, we consider the same syntax as in
Section~\ref{sec:syntax}, with the requirement that the set $\Phi$ of
primitive propositions have the form $\Phi' \times N$, for some set
$\Phi'$; that is, primitive propositions have the form $(p,i)$ for some
$p \in \Phi'$ and some agent $i \in N$.  We interpret these formulas
using a standard epistemic probability structure $M = (\Oo, (\Pi_j)_{j
\in N}, ({\cal P}_j)_{j \in N}, \pi)$, with a common interpretation
$\pi$, as in \cite{FH3}.  Thus, truth is no longer agent-dependent, so
we have only $(M,\oo)$ on the left-hand side of $\vDash$, not
$(M,\oo,i)$.
In
particular, if $(p,i)$ is a primitive proposition,
\[
(M,\oo) \vDash (p,i)  \  \text{iff} \   \pi(\oo)((p,i)) = \mathbf{true}.
\]
As expected, we have
\begin{multline*}\label{eq:prob_formula}
(M,\oo) \vDash a_1 \pr_j(\ffi_1) + \ldots + a_k \pr_j(\ffi_k)
\geq b   \ \text{iff} \ \nonumber\\ a_1
\mu_{j,\oo}([[\ffi_1]] \cap \Omega_{j,\oo}) + \ldots + a_k
\mu_{j,\oo}([[\ffi_k]] \cap \Omega_{j,\oo}) \geq b.
\end{multline*}
We no longer need to write $[[\ffi_j]]^{ou}_i$ or $[[\ffi_j]]^{in}_i$,
since all agents interpret all formulas the same way.

We now show how we can capture innermost and outermost scope using this
semantics.  Specifically, suppose that we start with an epistemic
probability structure $M = (\Oo, (\Pi_j)_{j \in N}, ({\cal P}_j)_{j \in
N}, (\pi_j)_{j \in N})$ over some set $\Phi$ of primitive propositions.
Consider the corresponding common-interpretation structure $M_c = (\Oo,
(\Pi_j)_{j \in N}, ({\cal P}_j)_{j \in N}, \pi)$ over $\Phi \times N$,
where $\pi(\oo)(p,i) = \pi_i(\oo)(p)$.    Thus, $M$ and $M_c$ are identical
except in the primitive propositions that they interpret, and how they
interpret them.  In $M_c$, the primitive proposition $(p,i) \in \Phi
\times N$ is interpreted the same way that $i$ interprets $p$ in $M$.

We can now define, for each formula $\phi$, two formulas $\phi_i^\inner$
and $\phi_i^\ou$ with the property that $(M,\oo,i) \vDashi \phi$ iff
$(M_c,\oo) \vDash \phi_i^\inner$ and $(M,\oo,i) \vDasho \phi$ iff
$(M_c,\oo) \vDash \phi_i^\ou$.  We start with $\phi_i^\inner$,
defining it by induction on structure:
\begin{itemize}
\item $p_i^\inner = (p,i)$
\item $(\psi \land \psi')_i^\inner = \psi_i^\inner \land (\psi'_i)^\inner$
\item $(a_1 \pr_j(\ffi_1) + \ldots + a_k \pr_j(\ffi_k) \geq b)_i^\inner
= a_1 \pr_j((\ffi_1)_j^\inner) + \ldots + a_k \pr_j((\ffi_k)_j^\inner) \geq b$
\item $(\CB_G \psi)_i^\inner = \CB_G (\land_{j \in G} B_j \psi_j^\inner)$.
\end{itemize}
Note that $\phi_i^\inner$ is independent of $i$ if $\phi$ is a probability
formula or of the form $\CB_G \psi$.  This is to be expected, since, as
we have seen, with innermost scope, the semantics of such formulas is
independent of $i$.
The definition of $(\CB_G \psi)_i^\inner$ is perhaps the only somewhat
surprising clause here; as we discuss after the proof of
Theorem~\ref{thm:trans} below, the more natural definition, $(\CB_G
\psi)_i^\inner = \CB_G(\psi_i^\inner)$, does not work.

For outermost scope, the first two clauses of the translation are
identical to those above; the latter two change as required for
outermost scope.  Thus, we get
\begin{itemize}
\item $p_i^\ou = (p,i)$
\item $(\psi \land \psi')_i^\ou = \psi_i^\ou \land (\psi'_i)^\ou$
\item $(a_1 \pr_j(\ffi_1) + \ldots + a_k \pr_j(\ffi_k) \geq b)_i^\ou
= a_1 \pr_j((\ffi_1)_i^\ou) + \ldots + a_k \pr_j((\ffi_k)_i^\ou) \geq b$
\item $(\CB_G \psi)_i^\ou = \CB_G (\psi_i^\ou)$.
\end{itemize}
Interestingly, here the natural definition of $(\CB_G \psi)_i^\ou$ does work.

\begin{theorem}\label{thm:trans} If $M$ is a probabilistic epistemic
structure over
$\Phi$ and $M_c$ is the corresponding common-interpretation structure over
$\Phi \times N$, then
\begin{itemize}
\item[(a)] $(M,\oo,i)\vDashi \phi$ iff $(M_c,\oo) \vDash \phi_i^\inner$;
\item[(b)] $(M,\oo,i)\vDasho \phi$ iff $(M_c,\oo) \vDash \phi_i^\ou$.
\end{itemize}
\end{theorem}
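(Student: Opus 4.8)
The plan is to prove both (a) and (b) simultaneously by a single induction on the structure of $\phi$, since the two translations differ only in the probability-formula and common-belief clauses, and the primitive-proposition and conjunction cases are literally the same. The base case is immediate: $(M,\oo,i) \vDashi p$ iff $\pi_i(\oo)(p) = \mathbf{true}$ iff $\pi(\oo)(p,i) = \mathbf{true}$ iff $(M_c,\oo) \vDash (p,i) = p_i^\inner = p_i^\ou$, using the definition $\pi(\oo)(p,i) = \pi_i(\oo)(p)$ of $M_c$. Negation and conjunction are routine, pushing the translation through the Boolean connective and invoking the induction hypothesis; note that for negation one needs the translation $(\neg\psi)_i^\inner = \neg\psi_i^\inner$, which is forced by the clause for conjunction together with the fact that the languages are closed under $\neg$ (I would state this explicitly even though it is not listed). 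The key bookkeeping fact to maintain through the induction, which I would isolate as a sub-claim, is that for every formula $\psi$, $[[\psi]]^\inner_i = \{\oo : (M_c,\oo) \vDash \psi_i^\inner\}$ and $[[\psi]]^\ou_i = \{\oo : (M_c,\oo) \vDash \psi_i^\ou\}$; this is just the pointwise statement of (a) and (b) repackaged as an identity of events, and it is exactly what is needed to match up the probability clauses.

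For the probability-formula case under outermost scope, I would unwind the definition of $\vDasho$: $(M,\oo,i) \vDasho \sum a_\ell \pr_j(\ffi_\ell) \geq b$ iff $\sum a_\ell \mu_{j,\oo}([[\ffi_\ell]]^\ou_i \cap \Oo_{j,\oo}) \geq b$. By the induction hypothesis (in its event form), $[[\ffi_\ell]]^\ou_i = [[(\ffi_\ell)_i^\ou]]$ in $M_c$, and since $M$ and $M_c$ share the same partitions and probability assignments, $\mu_{j,\oo}$ and $\Oo_{j,\oo}$ are the same in both; so this holds iff $(M_c,\oo) \vDash \sum a_\ell \pr_j((\ffi_\ell)_i^\ou) \geq b$, which is exactly $(M_c,\oo) \vDash (\sum a_\ell \pr_j(\ffi_\ell) \geq b)_i^\ou$. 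The innermost-scope case is identical except that the inner formulas are tagged with $j$ rather than $i$: $(M,\oo,i)\vDashi \sum a_\ell \pr_j(\ffi_\ell)\geq b$ iff $\sum a_\ell \mu_{j,\oo}([[\ffi_\ell]]^\inner_j \cap \Oo_{j,\oo}) \geq b$, and the induction hypothesis gives $[[\ffi_\ell]]^\inner_j = [[(\ffi_\ell)_j^\inner]]$, matching the translation clause. I would also remark here that the resulting translated formula does not depend on $i$, consistent with the observed $i$-independence of $\vDashi$ on probability formulas.

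The main obstacle is the common-belief case under innermost scope, where the naive translation $\CB_G \psi \mapsto \CB_G(\psi_i^\inner)$ fails and one must use $(\CB_G\psi)_i^\inner = \CB_G(\bigwedge_{j\in G} B_j \psi_j^\inner)$. The issue is that $\CB_G$ in $M_c$ quantifies over reachable states via the $B_j$'s, and at each such state a $B_j$-transition should, under innermost scope, re-evaluate the "body" of the common-belief operator according to $j$'s interpretation, not according to the fixed outer agent $i$. So I would prove this case by first establishing, via induction on $m$, that $(M,\oo,i)\vDashi \EB_G^m\psi$ iff $(M_c,\oo) \vDash (\EB_G^m\psi)_i^\inner$ where $(\EB_G^m\psi)_i^\inner := \EB_G^m(\bigwedge_{j\in G} B_j\psi_j^\inner)$ — actually one has to be slightly careful: the right object to induct on is $(M_c,\oo)\vDash \bigwedge_{j\in G} B_j(\EB_G^{m-1}\text{-stuff})$, unfolding one level at a time and using that $(M,\oo,i)\vDashi B_j\chi$ iff $(M,\oo',j)\vDashi \chi$ for all $\oo'$ $j$-accessible from $\oo$, which by the induction hypothesis on $\chi$ (a shorter formula or handled by the inner induction) becomes $(M_c,\oo')\vDash \chi_j^\inner$. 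The point is that the extra layer $\bigwedge_{j\in G}B_j$ exactly records the "switch to $j$'s perspective" that innermost scope performs at each epistemic step. Having matched all finite levels $m$, the $\CB_G$ case follows by taking the conjunction over all $m$. For outermost scope the analogous argument is simpler because there is no perspective switch — the outer agent $i$'s interpretation governs throughout — so $\CB_G(\psi_i^\ou)$ works directly, and one checks $(M,\oo,i)\vDasho \EB_G^m\psi$ iff $(M_c,\oo)\vDash \EB_G^m(\psi_i^\ou)$ by the same unfolding but with $i$ fixed. I expect the write-up to require care only in setting up the nested induction (on formula structure, with an inner induction on the iteration depth $m$) so that the common-belief clause does not create circularity.
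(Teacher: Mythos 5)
Your overall strategy---induction on the structure of $\phi$, with everything routine except the common-belief case under innermost scope, which you handle by matching the finite iterates $\EB_G^m$---is the same as the paper's. But there is a genuine gap in exactly that case, and it has two parts. First, your stated level-matching lemma is off by one and, as written, false: since $\bigwedge_{j\in G} B_j \psi_j^\inner$ already encodes one layer of mutual belief, the correct correspondence is $(M_c,\oo) \vDash \EB_G^k(\bigwedge_{j\in G} B_j\psi_j^\inner)$ iff $(M,\oo,i) \vDashi \EB_G^{k+1}\psi$, not $\EB_G^m \leftrightarrow \EB_G^m$. (At $m=1$ your version would equate $\bigwedge_j B_j\psi$ with ``every $j'$ believes that every $j$ believes $\psi$,'' which is $\EB_G^2\psi$.) You flag that ``one has to be slightly careful'' here, but the care matters for the second, more substantive point: because $\CB_G\chi$ is defined as $\EB_G^k\chi$ for $k\ge 1$ (the level $k=0$, i.e.\ $\chi$ itself, is not included), the corrected correspondence only yields $(M_c,\oo)\vDash \CB_G(\bigwedge_{j\in G} B_j\psi_j^\inner)$ iff $(M,\oo,i)\vDashi \EB_G^k\psi$ for all $k\ge 2$. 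Taking ``the conjunction over all $m$,'' as you propose, therefore does not deliver $\CB_G\psi$ on the $M$ side, which requires all $k\ge 1$.

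To close the converse direction you still need the implication $(M,\oo,i)\vDashi \EB_G^2\psi \implies (M,\oo,i)\vDashi \EB_G\psi$. This is the one non-trivial step in the paper's proof of this theorem: it follows from positive introspection for probability-one belief, $B_jB_j\psi \iff B_j\psi$, since $\bigwedge_{j\in G} B_j(\bigwedge_{j'\in G} B_{j'}\psi)$ implies $\bigwedge_{j\in G} B_jB_j\psi$, which is equivalent to $\bigwedge_{j\in G} B_j\psi$. Without this step your argument establishes only one direction of the biconditional for the $\CB_G$ clause. Everything else in your write-up---the base case, the Boolean cases, the probability-formula clauses for both scopes (including the event-form restatement of the induction hypothesis), and the outermost-scope $\CB_G$ case---is correct and matches what the paper leaves to the reader.
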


\begin{proof}
We prove the result by induction on the structure of $\phi$.  The
argument for outermost scope is completely straightforward, and left to
the reader.  The argument for innermost scope is also straightforward,
except for the case that $\phi$ has the form $\CB_G \psi$.  We now
consider this case carefully.

By definition,
$$\begin{array}{ll}
&(M_c,\oo) \vDash (\CB_G \psi)_i^\inner\\
\mbox{iff} &(M_c,\oo) \vDash \CB_G (\land_{j \in G} B_j \psi_j^\inner)\\
\mbox{iff} &(M_c,\oo) \vDash (\EB_G)^k  (\land_{j \in G} B_j \psi_j^\inner)
\mbox{ for $k= 1,2,3,\ldots$}.
\end{array}
$$
Note that, by definition, $(\EB_G\psi)_i^\inner = \land_{j \in G} B_j
\psi_j^\inner$.  Thus, by the induction hypothesis, it follows that
$$(M_c,\oo) \vDash \land_{j \in G} B_j
\psi_j^\inner \mbox{ iff }
(M,\oo,i) \vDashi \EB_G. $$
Now by a straightforward induction on $k$, we can show that
$$(M_c,\oo) \vDash \EB^k(\land_{j \in G} B_j \psi_j^\inner) \mbox{ iff }
(M,\oo,i) \vDashi \EB_G^{k+1}\psi.$$
That is,
\begin{equation}\label{eq:CB}
(M_c,\oo) \vDash \CB(\land_{j \in G} B_j \psi_j^\inner) \mbox{ iff }
(M,\oo,i) \vDashi \EB_G^k\psi \mbox{ for $k=2, 3, 4, \ldots$}.
\end{equation}

It immediately follows from \eqref{eq:CB}
that if $(M,\oo,i) \vDashi \CB_G \psi$, then
$(M_c,\oo) \vDash \CB(\land_{j \in G} B_j \psi_j^\inner)$.
The converse also follows
from (\ref{eq:CB}), once we show that
$(M,\oo,i) \vDashi \EB_G^2 \psi$ implies $(M,\oo,i)\vDashi \EB_G \psi$.
But this too follows easily since
$$\begin{array}{ll}
&(M,\oo,i) \vDashi \EB_G^2 \psi\\
\mbox{implies } &(M,\oo,i) \vDashi \land_{j \in G} B_j (\land_{j \in G} B_j \psi)\\
\mbox{implies } &(M,\oo,i) \vDashi \land_{j \in G} B_j ( B_{j} \psi)\\
 \mbox{iff } &(M,\oo,i) \vDashi \land_{j \in G} B_j \psi\\
\mbox{iff } &(M,\oo,i) \vDashi \EB \psi.
\end{array}$$
This completes the argument.
\end{proof}

To see why we need we need the more complicated definition
of $(\CB_G \psi)_i^\inner$, it is perhaps best to consider an example.
By definition,  $(\CB_{\{1,2\}} p)_1^\inner = \CB_{\{1,2\}} (B_1 (p,1)
\land  B_2(p,2))$.  By way of contrast, $\CB_{\{1,2\}} (p_1^\inner) =
\CB_{\{1,2\}} (p,1)$, which (using arguments similar in spirit to those
used above) can be shown to be equivalent to
$\CB_{\{1,2\}} (B_1 (p,1) \land B_2 (p,1))$.  They key point here is
whether we have $B_2 (p,1)$ or $B_2 (p,2)$.  We want the latter, which
is what we get from the more complicated translation that we use; it is
easy to show that the former does not give the desired result.  These
issues do not arise with outermost scope.

Theorem~\ref{thm:trans} shows that, from a modeler's point of view,
there is no loss in working with common-interpretations structures.
Any structure that uses ambiguous propositions can be converted to one
that uses unambiguous propositions of the form $(p,i)$.  In a sense,
this can be viewed as a strengthening of Theorem~\ref{pro:equivalence}.
Theorem~\ref{pro:equivalence} says that any formula that is satisfiable
using innermost or outermost semantics in the presence of ambiguity is
also satisfiable in a common-interpretation structure.  However, that
common-interpretation structure might be quite different from the
original structure.  Theorem~\ref{thm:trans} shows that if a formula
$\phi$ is true according to agent $i$ at a state $\oo$ in a structure $M$, then
a variant of $\phi$ (namely, $\phi_i^\inner$ or $\phi_i^\ou$) is true at
state $\oo$ in essentially the same structure.

Moreover, once we add propositions of the form $(p,i)$ to the language,
we have a great deal of additional expressive power.  For example, we
can say directly that agent $i$ believes that all agents interpret $p$
the same way that he does by writing $B_i (\land_j ((p,i) \iff (p,j)))$.
We can also make more complicated statements, such as ``agent $i$
believes that agents $j$ and $k$ interpret $p$ the same way, although
they interpret $p$ differently from him: $B_i((p,j) \iff (p,k)) \land
\neg B_i((p,i)\iff (p,j))$.  Clearly, far more subtle relationships
among agents' interpretations of primitive propositions can be expressed
in this language.

\commentout{
So why do we bother using models with ambiguity?  Perhaps the main
reason is that it allows us to describe the situation from the agent's
point of view.  For example, if we are dealing with outermost scope, an
agent does not realize that there are other interpretations possible
other than the one he is using.   Thus, the simpler language more
}

\section{Discussion} \label{sec:concl}

\commentout{
We have defined a logic for reasoning about ambiguity, and considered
the tradeoff between having a common prior (so that everyone starts out
with the same belief) and having a common interpretation (so that
everyone interprets all formulas the same way).  We showed that, in a
precise sense, allowing different beliefs is more general than allowing
multiple interpretations.  But we view that as a feature, not a
weakness, of considering ambiguity.  Ambiguity can be viewed as a reason
for differences of beliefs; as such, it provides some structure to these
differences.
}

We have defined a logic for reasoning about ambiguity, and then showed
that, in two senses, we really do not need structures with ambiguity:
(1) the same axioms hold whether or not we have ambiguity, and (2) we
can use a richer language to talk about the ambiguity, while giving an
unambigious interpretation to all formulas.
So why do we bother using structures with ambiguity?  Perhaps the main
reason is that it allows us to describe the situation from the agent's
point of view.  For example, if we are dealing with outermost scope, an
agent does not realize that there are other interpretations possible
other than the one he is using.   Thus, the simpler language more
directly captures agents' assertions.
Similarly, a structure with
ambiguity may more accurately describe a situation than a structure
with a common interpretation.
We thus believe that structures with
ambiguity will prove to be a useful addition to a modeler's toolkit.
In any case, whatever modeling framework and language is used, it is
clear that we need to take ambiguity into account, and reason explicitly
about it.

There are two extensions of our framework that we have not considered.
First, we model ambiguity by allowing a formula to be interpreted differently
by different agents, we assume that each individual agent disambiguates
each formula.  That is, no agent says ``I'm not sure how to disambiguate
$\phi$.  It could correspond to the $U$ of worlds, or it could
correspond to $U'$; I'm not sure which is right.''  As we mentioned
earlier, this view is closer to that of Lewis \citeyear{Lewis82} and
Kuijer \citeyear{Kui13}.  It would involve a nontrivial change to our
framework to capture this.
Second, we have allowed only ambiguity about the meaning of primitive
propositions (which then extends to ambiguity about the meaning of
arbitrary formulas).  But we have not considered ambiguity about the
meaning of belief; for example, $i$ might interpret belief in $\phi$ terms of
having a proof of $\phi$ in some axiom system, while $j$ might use a
possible-worlds interpretation (as we do in this paper).  Capturing this
seems interesting, but quite difficult.  Indeed, even without ambiguity,
it is not nontrivial to design a logic that captures various
resource-bounded notions of belief.  (See \cite{FHMV}[Chapters 9--10]
for more on this topic.)

\paragraph{Acknowledgments:} %
We thank Moshe Vardi and the anonymous reviewers of this paper for helpful
comments.
Halpern's work was supported
in part by NSF grants IIS-0534064, IIS-0812045, IIS-0911036, and CCF-1214844,
A preliminary version of
this work appeared as ``Ambiguous language and differences in beliefs''
in the \emph{Principles of Knowledge Representation and Reasoning:
Proceedings of the Thirteenth International Conference}, 2012, pp. 329--338.
by AFOSR grants
FA9550-08-1-0438, FA9550-12-1-0040, and FA9550-09-1-0266, and by ARO grant
W911NF-09-1-0281.
The work of Kets was supported in part by AFOSR grant FA9550-08-1-0389.

\bibliographystyle{chicagor}
\bibliography{z,joe}
\end{document}